\newtheorem{thm}{Theorem}[section]
\newtheorem{lem}[thm]{Lemma}
\newcommand{\spaceo}{\hspace{2 mm}}
\newcommand{\setsep}{ \spaceo | \spaceo}
\newcommand{\argmax}{\operatornamewithlimits{argmax}}
\newcommand{\innerp}[2]{\langle #1,#2 \rangle}
\newcommand{\BRK}[1]{\left(#1\right)}
\newcommand{\SETBRK}[1]{\left\{#1\right\}}
\author{
  Mark Kozdoba \\
  \texttt{markk@tx.technion.ac.il}
 \and
 Shie Mannor\\
  \texttt{shie@ee.technion.ac.il}
}
\title{Overlapping Community Detection by Online Cluster Aggregation}
\date{}
\begin{document} 

\maketitle

\begin{abstract}
We present a new online algorithm for detecting overlapping communities. The main ingredients are a
modification of an online k-means algorithm and a new approach to modelling overlap in communities.
An evaluation on large benchmark graphs shows that the quality of discovered communities compares favourably 
to several methods in the recent literature, while the running time is significantly improved. 
\end{abstract}

\section{Introduction}
A community in a graph is a set of nodes such that the density of connections between the nodes within the set 
is higher then then density of connections between the set and its complement. Communities have been observed 
in a wide variety of real world graphs, such as scientific paper citation networks, friendship networks in 
social media, link graphs of the internet, transportation networks and protein-protein interaction networks, 
to name a few. Generally, members of the same community share similar application specific properties and 
communities can be regarded as higher level building blocks of the graphs. 

In many situations it is natural to assume that a node in a graph can belong to several communities. For 
instance, a member on a social network can belong to a community 'Family', to community 'School' and 
to community 'Karate club'. A node in a transportation network can belong to several communities if it is 
a hub on a boundary of two or more regions. 

Community detection is an active research field, and it has created a large and growing literature. We refer to \cite{Fortunato201075} for an extensive survey and a sample of applications, and to \cite{XieSurvey}, which surveys specifically overlapping community detection methods.

As with many data mining problems, one can say that there are two main challenges in community detection. The 
first is to detect communities as precisely as possible. One common approach to measuring this is to run the 
algorithms on a set of LFR benchmarks,\cite{LFRBench},\cite{LFBench}. The LFR benchmarks are models of random graphs 
with a community structure and have certain characteristics resembling real world graphs, such as power law 
degree distributions. More details are given in Section \ref{sec:lfr}. The quality of the communities produced 
by the algorithm is then asserted  by comparing them to the known ground truth communities of the benchmarks,
using the extended mutual normalized information (ENMI) measure, defined in \cite{LFBench}.

The other challenge is to design algorithms that can run on really large graphs in a reasonable time. In 
recent years several methods for detection of overlapping communities were developed that scale to graphs 
with millions of nodes. In particular, as reported in \cite{GopBlei}, the algorithms SVI, due to 
\cite{GopBlei}, the Poisson modelling algorithm due to \cite{NewmanPoi}, the COPRA algorithm due to 
\cite{Greg}, and the INFOMAP algorithm due to \cite{InfomapOvr} can produce non-trivial results on LFR graphs 
with $N=1,000,000$ nodes and about $750$ overlapping communities, each of size $2000$ to $5000$. It was found 
that the SVI and Poisson algorithms produced an ENMI score of $.8$ on these graphs, while COPRA and INFOMAP 
produced ENMI of $.5$ and $.25$ respectively. On the other hand, the running time allocated to the SVI and 
Poisson algorithms was 24 hours, after which the algorithms were terminated and the current best estimate on 
the communities was returned. The running time of COPRA and INFOMAP were not specified. 

In this paper we present a simple online algorithm,CLAGO (Cluster Aggregation for Overlapping Communities), for detecting overlapping communities. We evaluate our algorithm 
on a set of large benchmarks with the same parameters as were considered in \cite{GopBlei} and find that the performance 
in terms of ENMI is similar to the performance of SVI and Poisson algorithms, while the running time of our algorithm 
is significantly better. In particular, our algorithms produces ENMI of $.8$ on the above mentioned $N=1,000,000$ 
benchmark after $2.5$ hours. 

Our algorithm operates in two stages. In the first stage, we produce a non-overlapping partition of a graph. This part of 
the algorithm takes as an input the number of nodes $N$, and the number of communities to find $k$ (however, see later 
remarks about pre-specifying the number of communities). It maintains $k$ vectors of length $N$ as parameters, and it is 
assumed that the algorithm is 
presented with nodes of the graph, one at a time, in a random order. Each node is presented to the algorithm 
together with the set of its neighbours, and for each node an update to the parameters is made. After all nodes were 
presented, we either terminate or proceed to another iteration. The description of the update is given in Section \ref{sec:noc}. Curiously enough, we find that the above 
mentioned $N=1,000,000$ benchmarks contain enough redundancy so that even a single iteration (<30 min.) is 
sufficient to obtain an ENMI as high as $.75$. In the second stage of the algorithm we derive overlapping 
communities from the disjoint communities. The general philosophy is that the probability that a node belongs 
to a given community is proportional to the probability of hitting the node by a random walk started at the 
community. Details are given in Section \ref{sec:oc}.

The rest of the paper is organized as follows: In Section \ref{sec:algorithm} we define the algorithm and derive some of its basic properties. Section \ref{sec:eval} contains the empirical evaluation of the algorithm.

\section{Algorithm}
\label{sec:algorithm}

Suppose we are given a graph $G$ with $N$ vertices, and we want to find $k$ overlapping communities in it. 
The algorithm proceeds in two stages. First we partition the graph into $k$ disjoint communities. 
Then we use a step of a random walk from the disjoint parts to deduce the overlapping communities.

\subsection{Disjoint Communities}
\label{sec:noc}
Denote the graph by $G= (V,E)$. For a node $x\in V$, denote by $d_x$ the degree of $x$, and by 
$n_x$ the set of neighbours of $x$. Set $w_x$ to be a distribution of one step of a random walk from $x$ - 
a uniform measure on $n_x$.  For a subset $S\subset V$, let $d_S = \sum_{x \in S}  d_x$ be the total degree 
of the set $S$. 

Let $p_1,\ldots,p_k$ be randomly initialized probability measures
on $V$. Specifically, we use the uniform disjoint initialization - partition $V$ into $k$ random subsets of 
equal size, $S_1,\ldots,S_k \subset V$, and set $p_i(x) = 1_{S_{i}}(x) \big/ |S_i|$. 

\begin{algorithm}
   \caption{Disjoint Communities, CLAG}
   \label{alg:noc}
\begin{algorithmic}[1]	
	\State Initialize $p_1,\ldots,p_k$
    \State Initialize counters $m_1 = m_2 = \ldots = m_k = 0$
	\Repeat \label{alg:bigloop}
		\State Set $x_1,\ldots,x_N$ to be a random permutation of the nodes.
		\For{ $i \in 1,\ldots,N$ }   \label{alg:for_loop}
			\State  $t \gets \argmax_{ 1\leq j \leq k} \innerp{p_j}{w_{x_i}}$ \label{alg:assignment_line}
			\State  $m_t \gets m_t + d_{x_i}$                    \label{alg:counters_line}
			\State  $p_t \gets \BRK{1 - \frac{d_{x_i}}{m_t}} p_t + \frac{d_{x_i}}{m_t} w_{x_i}$
			                                                                 \label{alg:scaling_line}
		\EndFor
	\Until{Stopping condition is met}           \label{alg:stopping_criterion}
	\State For $t \leq k$, set \\ 
      \spaceo \spaceo  $C_t = \SETBRK{ x \in V \spaceo | \spaceo t = 
	                            \argmax_{ 1\leq j \leq k} \innerp{p_j}{w_{x_i}} }$
	\State Return the partition $C_1,\ldots, C_k$, and the parameters $\{p_j\}, \{m_j\}$. 
\end{algorithmic}
\end{algorithm}

The stopping criterion on line \ref{alg:stopping_criterion} can be chosen in any common way. We have found 
that simply limiting then number of iterations to somewhere from $5$ to $15$ is usually sufficient.

The inner product on line \ref{alg:assignment_line} is crucial to the algorithm. Indeed, if the inner product 
\begin{equation}
	\label{alg:innerp}
	\innerp{p_j}{w_{x_i}}
\end{equation}
is replaced by the squared Euclidean norm, 
\begin{equation}
	\label{alg:eucl_norm}
	-|p_j - w_{x_i}|^2 = -|w_{x_i}|^2 + 2\innerp{p_j}{w_{x_i}} - |p_j|^2
\end{equation}
then Algorithm \ref{alg:noc} is
the online k-means algorithm in a form given in \cite{BotBen} (with a caveat that every $w_x$ comes with 
multiplicity $d_x$).  We find empirically that the usage of (\ref{alg:innerp}) instead of 
(\ref{alg:eucl_norm}) results in significantly better quality of found partitions. Let us mention one possible 
reason why (\ref{alg:innerp}) performs better than (\ref{alg:eucl_norm}). The difference between the 
expressions is the term $|p_j|^2$, which is small for spread-out, large support measures. This can, for 
instance, be observed on the Political Blogs example \cite{PolBlogs} (see Section
 \ref{sec:std_examples}), where Algorithm \ref{alg:noc} with cost (\ref{alg:eucl_norm}) finds one huge 
 component instead of two smaller ones. However, the cost (\ref{alg:innerp}) has bias 
 towards small support measures. As mentioned in Section \ref{sec:olfr}, on large graphs with high $k$, 
 Algorithm \ref{alg:noc},CLAG, tends to produce, in addition to the true partition of the graph, some small 
 components. These components can easily be detected by their size and pruned. 

In the rest of this section we describe several basic properties of CLAG Algorithm. First we discuss 
the shape of the parameters $p_i$ when the algorithm is close to convergence. 

Denote by $p_j^s$ and $m_j^s$ the value of parameters $p_j$ and $m^j$ at the start of iteration $s$ of the 
``repeat" loop
of the algorithm.  For each $x\in V$, set $C^s(x)$ to be the parameter index to which $x$ was assigned during 
iteration $s$. 
In other words, $C^s(x)$ is the value of $t$ that was assigned in line $\ref{alg:assignment_line}$ when $x_i$ 
was $x$, on the $s$'th iteration of the ``repeat" loop. We say that the algorithm is in \textit{stationary 
state} at iteration $s$ if for all $x\in V$ and all $s'>s$, 
\begin{equation}
C^{s'}(x) = C^{s}(x).
\end{equation}
Clearly if the parameters $p_j^s$ converge to some limiting values $\hat{p}_j$, then the algorithm will enter 
the stationary state for large enough $s$ (assuming some consistent rule for breaking ties at line 
\ref{alg:assignment_line}). 

Denote by $\pi$ the stationary measure of a random walk on $G$, 
\begin{equation}
\label{eq:pi_def}
\pi(x) = d_x \big/ d_V. 
\end{equation}
For a subset $C \subset V$, denote by $\pi_C$ the restriction of $\pi$ to $C$. Namely, $\pi_C(x) = d_x \big/ 
d_C$ if 
$x \in C$ and $\pi_C(x) = 0$ otherwise. Set 
\begin{equation}
\label{eq:mu_c_def}
\mu_C = \frac{1}{d_C} \sum_{x\in C} d_x \cdot w_x. 
\end{equation}
Then $\mu_C$ is the distribution of a random walk that was started from $\pi_C$ and performed one step. 

\begin{lem}
\label{lem:limit_shape}
Assume that Algorithm \ref{alg:noc} is at stationary state at some iteration $s$. 
For $j \leq k$, let 
\begin{equation}
	P_j^s = \SETBRK{x\in V \setsep  C_s(x) = j}
\end{equation}
be the set of nodes assigned to $p_j^{s}$. Then for all $j\leq k$,
\begin{equation}
	\label{alg:pj_form}
    p_j^{s'} \rightarrow \mu_{P_j^s} , 
\end{equation} 
and 
\begin{equation}
	\label{alg:nj_form}
    \frac{m_j^{s'}}{\sum_{l \leq k} m^{s'}_l} \rightarrow  d_{P_j^s} \big/ d_V
\end{equation} 
as $s' \rightarrow \infty$. 
\end{lem}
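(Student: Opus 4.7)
The plan is to track the unnormalized products $m_j p_j$ rather than the normalized parameters directly; in that form the update becomes additive and everything reduces to bookkeeping once the assignment partition is frozen.

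First I would rewrite the update on line \ref{alg:scaling_line} in the equivalent form: if $m_j^{\mathrm{new}} = m_j^{\mathrm{old}} + d_{x}$ and $p_j^{\mathrm{new}} = (1 - d_x/m_j^{\mathrm{new}}) p_j^{\mathrm{old}} + (d_x/m_j^{\mathrm{new}}) w_x$, then $m_j^{\mathrm{new}} p_j^{\mathrm{new}} = m_j^{\mathrm{old}} p_j^{\mathrm{old}} + d_x w_x$. Thus the quantity $m_j p_j$ is simply incremented by $d_x w_x$ every time a node $x$ is assigned to cluster $j$, while $m_j$ is incremented by $d_x$; the order in which the nodes are processed inside an iteration does not matter for the cumulative totals.

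Next I would use the stationary state hypothesis. By assumption, for every $s' \geq s$ the assignment of each $x\in V$ on iteration $s'$ is $C^{s'}(x) = C^s(x)$, so within a single iteration the nodes assigned to cluster $j$ are exactly $P_j^s$. Applying the additive form of the update across one full pass of the for-loop therefore gives
\begin{equation}
m_j^{s'+1} = m_j^{s'} + d_{P_j^s}, \qquad m_j^{s'+1} p_j^{s'+1} = m_j^{s'} p_j^{s'} + \sum_{x \in P_j^s} d_x w_x = m_j^{s'} p_j^{s'} + d_{P_j^s} \mu_{P_j^s},
\end{equation}
where the last equality is the definition (\ref{eq:mu_c_def}) of $\mu_{P_j^s}$. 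Iterating from $s$ to $s' = s+t$ yields
\begin{equation}
m_j^{s+t} = m_j^s + t\, d_{P_j^s}, \qquad m_j^{s+t}\, p_j^{s+t} = m_j^s p_j^s + t\, d_{P_j^s}\, \mu_{P_j^s}.
\end{equation}
Dividing the second identity by the first and letting $t \to \infty$ proves (\ref{alg:pj_form}). For (\ref{alg:nj_form}), I would note that $\{P_l^s\}_{l \le k}$ partitions $V$, so $\sum_l d_{P_l^s} = d_V$; summing the $m_l$-recursion over $l$ gives $\sum_l m_l^{s+t} = \sum_l m_l^s + t\, d_V$, and taking the ratio and letting $t\to\infty$ produces the claimed limit.

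There is no real obstacle here; the only subtle point is the choice to work with $m_j p_j$ instead of $p_j$, which linearizes the update and makes the within-iteration contribution collapse cleanly to $d_{P_j^s}\mu_{P_j^s}$ regardless of the random permutation chosen on each pass.
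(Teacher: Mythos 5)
Your proof is correct and takes essentially the same route as the paper: your additive identity $m_j^{s'+1}p_j^{s'+1}=m_j^{s'}p_j^{s'}+d_{P_j^s}\mu_{P_j^s}$ is just the unnormalized form of the paper's one-iteration recursion (\ref{eq:pj_recursion}), and you then carry out explicitly the limit computation that the paper leaves as ``follows from the stationarity assumption.''
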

\begin{proof}
By the update rules at lines \ref{alg:counters_line} and \ref{alg:scaling_line}, the following more general 
relation holds for all $s\geq 1$: 
\begin{equation}
\label{eq:pj_recursion}
p_j^{s+1} = \frac{m^s_j}{m^s_j + d_{P_j^s}} p_j^s + \frac{d_{P_j^s}}{m^s_j + d_{P_j^s}} \mu_{P_j^s}.
\end{equation}
The claim then follows from the stationarity assumption. 
\end{proof}

There are two main consequences of Lemma \ref{lem:limit_shape}. First, the particular shape $\mu_{P_j}$ of the 
parameter $p_j$ will be useful for the deduction of overlapping communities in the next section. More 
importantly, however, Lemma \ref{lem:limit_shape} provides us with a far-reaching interpretation of the 
quantity $\innerp{w_x}{p_j}$. Note that according to (\ref{eq:pj_recursion}), the initial values of the 
parameters $p_j$, produced by random initialization, are erased quite quickly. Indeed, after the first 
iteration, the weight of the initial value $p^1_j$ in $p^2_j$ is $\frac{1}{d_{P_j^1}}$. Since the number of 
components $k$ is usually small compared to the total degree of the graph, and the original $p_j$s are 
disjointly supported, there will be many sets $P_j^1$ with high $d_{P_j}$. Therefore $p_j$ will typically 
look like a convex combination of a few measures of the form $\mu_C$, for some subsets $C\subset V$. With this 
in mind, for a node $x \in V$ and a subset $C \subset V$, let us interpret the quantity $\innerp{w_x}
{\mu_{C}}$. 
For any $y\in V$, note that 
\begin{equation}
\mu_C(y) = \frac{|n_y \cap C|}{d_C} = \frac{\#\BRK{\mbox{edges from y to C}}}{d_C}. 
\end{equation}
Thus, 
\begin{equation}
\innerp{w_x}{\mu_{C}} = \frac{1}{d_x} \sum_{y\in n_x} \frac{|n_y \cap C|}{d_C}, 
\end{equation}
and the above sum is the number of paths of \textit{length two} from $x$ to $C$, normalized 
by the total degree of $C$. Thus, assuming $p_j$ is close to the form $\mu_C$ for some $C$, the cost 
(\ref{alg:innerp}) prefers measures $p_j$ with a large number of second order neighbours of $x$ and a small 
total degree.  

From the above discussion it follows that CLAG works by maintaining its current estimates
of the communities and aggregating nodes towards the communities that have the most similarity with the nodes. 
Of course, this is the operating scheme of many community detection algorithms. Perhaps the most close in 
spirit to our algorithm is the label propagation algorithm for non-overlapping communities due to 
\cite{Rhagavan}. In this algorithm, one simply assigns a node $x$ to a community which contains the maximal 
number of $x$'s neighbours, among all the existing communities. The above mentioned COPRA algorithm, \cite{Greg}, is a particular extension of the label propagation algorithm to overlapping communities. Some 
even earlier examples of algorithms that use ``per node" iteration schemes and neighbourhood based decisions 
are the works \cite{Jerrum} and \cite{SBB}. The distinctive feature of CLAG algorithm is that, as 
mentioned earlier, the cost $\innerp{w_x}{\mu_{C}}$ implicitly counts neighbours of $x$ at distance two rather 
then direct neighbours, and thus provides a less noisy estimate of whether node $x$ should belong to community 
$C$.

\subsection{Overlapping Communities}
\label{sec:oc}

Suppose that for a graph $G=(V,E)$ we obtained a partition $C_1,\ldots,C_k$ into disjoint communities from 
CLAG algorithm.  It is natural to assume that a node should be a member of a given community if it has 
many links to other members of this community. Specifically, for a node $x\in V$, we can define its membership 
in a given 
community using the probability to reach $x$ by a step of a random walk started at that community. In this 
way, a node can be considered a member of several communities if the probability to reach it from these 
communities is relatively high. 
We now define this formally. 

Recall that we denote by $\pi$ the stationary measure of the random walk on $G$, (\ref{eq:pi_def}).
For any partition $C_1,\ldots,C_k$ of $V$ the following decomposition holds:
\begin{equation}
	\label{eq:pi_decomposition}
	\pi = \sum_{j=1}^k \pi(C_k) \mu_{C_k}.
\end{equation}
Indeed, the right hand-side describes the distribution of a process of choosing one of the components $C_j$ at 
random (with probabilities $\pi(C_j)$), and making a step of a random walk from that component. The equality 
(\ref{eq:pi_decomposition}) then states the invariance of $\pi$ under the random walk. Conversely, suppose 
the random walk hit a node $x$. Denote by $\gamma_{x}(j)$ the probability that the component $C_j$ was chosen 
given the node $x$ was hit. Then
\begin{eqnarray}
	\label{eq:gamma_def}
	\gamma_{x}(j) =  \frac{\pi(C_j) \mu_{C_j}(x)}{  \sum_{i=1}^k \pi(C_j) \mu_{C_j}(x)}  =  \frac{\pi(C_j) \mu_{C_j}(x)}{  \pi(x)}. 	           
\end{eqnarray}
We regard the $\gamma_x(j)$'s as a probabilistic membership model. A node $x$ will be considered a member of community $C_j$ with probability $\gamma_x(j)$. 

The benchmarks for overlapping communities are usually binary, such that a node is either a member of certain community or not. We can derive such binary assignments by simple thresholding of $\gamma_x(j)$. Specifically, 
fix some threshold value $\alpha \in [0,1]$.  The value $\alpha = 0.5$ works well in most cases. For a node
$x \in V$, set $s = \argmax_{j\leq k} \gamma_x(j)$ to be the index of a community with maximal hit probability at $x$. 
Then assign $x$ to the communities $\Gamma_x$, where 
\begin{equation}
	\Gamma_x = \SETBRK{ j \leq k \setsep \gamma_x(j) \geq \alpha \gamma_x(s)}. 
\end{equation}

Up to this point we described how to obtain the overlapping communities $\Gamma_x$ from the 
measures $\mu_{C_j}$ and the weights $\pi(C_j)$. We note that there are two ways of obtaining these 
parameters from the output of CLAG. . One possibility is to directly construct the 
measures from the sets $C_j$, by iterating one time over the graph and summing the measures $w_x$ over the 
$x \in C_j$.  Alternatively, note that by Lemma \ref{lem:limit_shape}, assuming the algorithm terminated in 
a near-convergence state, the parameters $p_j$ are already in the form close to $\mu_{C_j}$ and 
the values $\hat{\pi}_j = \frac{n_j}{\sum_{i\leq k} n_i}$ approximate $\pi(C_j)$. We therefore could use 
these parameters directly in equation (\ref{eq:gamma_def}), thus avoiding an additional iteration over the graph.  Although we believe the second approach should work well, in this paper we experimented only with 
the first approach. 

For future reference, we formalize the overlapping communities algorithm as Algorithm \ref{alg:oc}, to which we refer as CLAGO.

\begin{algorithm}
   \caption{Overlapping Communities, CLAGO}
   \label{alg:oc}
\begin{algorithmic}[1]	
	\State {\bfseries Input:} Graph $G$, number of components $k$,  \\
	 \spaceo \spaceo \spaceo \spaceo threshold parameter $\alpha$. 
	\State Apply CLAG to obtain a partition $C_1,\ldots,C_k$.
	\State Compute the parameters $\pi(C_j), \mu_{C_j}$ and $\gamma_x$ using \\ 
	\spaceo \spaceo \spaceo \spaceo equations  (\ref{eq:pi_def}),(\ref{eq:mu_c_def}), and (\ref{eq:gamma_def}).
	\State Return the communities $\tilde{C_1},\ldots,\tilde{C_k}$, where \\
	 \spaceo \spaceo \spaceo \spaceo $\tilde{C_j} = \SETBRK{ x \in V \setsep j \in \Gamma_x}	$
\end{algorithmic}
\end{algorithm}

\subsection{Additional Remarks}
Most of the computation of CLAG is done in the loop of lines 5 to 9. We believe that parallelizing this loop to multiple processors should be possible, however the parallelization is not trivial. 
While performing the random walk can be done in parallel, the update to each of the $p_t$ has to be done using mini-batching and some care has to be taken in syncing the mini-batches updates. 
We leave the challenge of parallelizing the algorithm for future work.

\section{Evaluation}
In this section we present the experimental evaluation of CLAG and CLAGO algorithms. In section 
\ref{sec:std_examples} we illustrate the CLAG Algorithm on two well known small benchmark graphs. Section
\ref{sec:lfr} contains the evaluation of the CLAG algorithm on non-overlapping benchmark with parameters 
that were used in the benchmark paper \cite{LFComp}. In section \ref{sec:olfr} we provide the comparison 
of the CLAGO algorithm with the results that were given in \cite{GopBlei}.

\label{sec:eval}

\subsection{Some Standard Examples}
\label{sec:std_examples}
\begin{figure}[ht]
\vskip 0.2in
\begin{center}
\centerline{\includegraphics[width=\columnwidth]{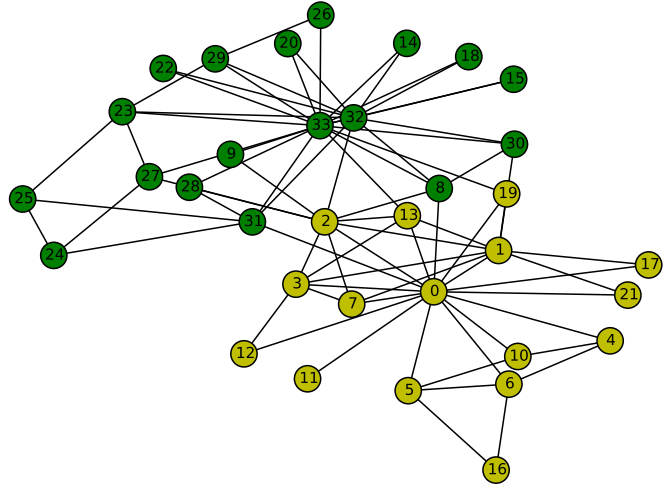}}
\caption{Karate Club Graph}
\label{fig:karate}
\end{center}
\vskip -0.2in
\end{figure}

\begin{figure}[ht]
\vskip 0.2in
\begin{center}
\centerline{\includegraphics[width=\columnwidth]{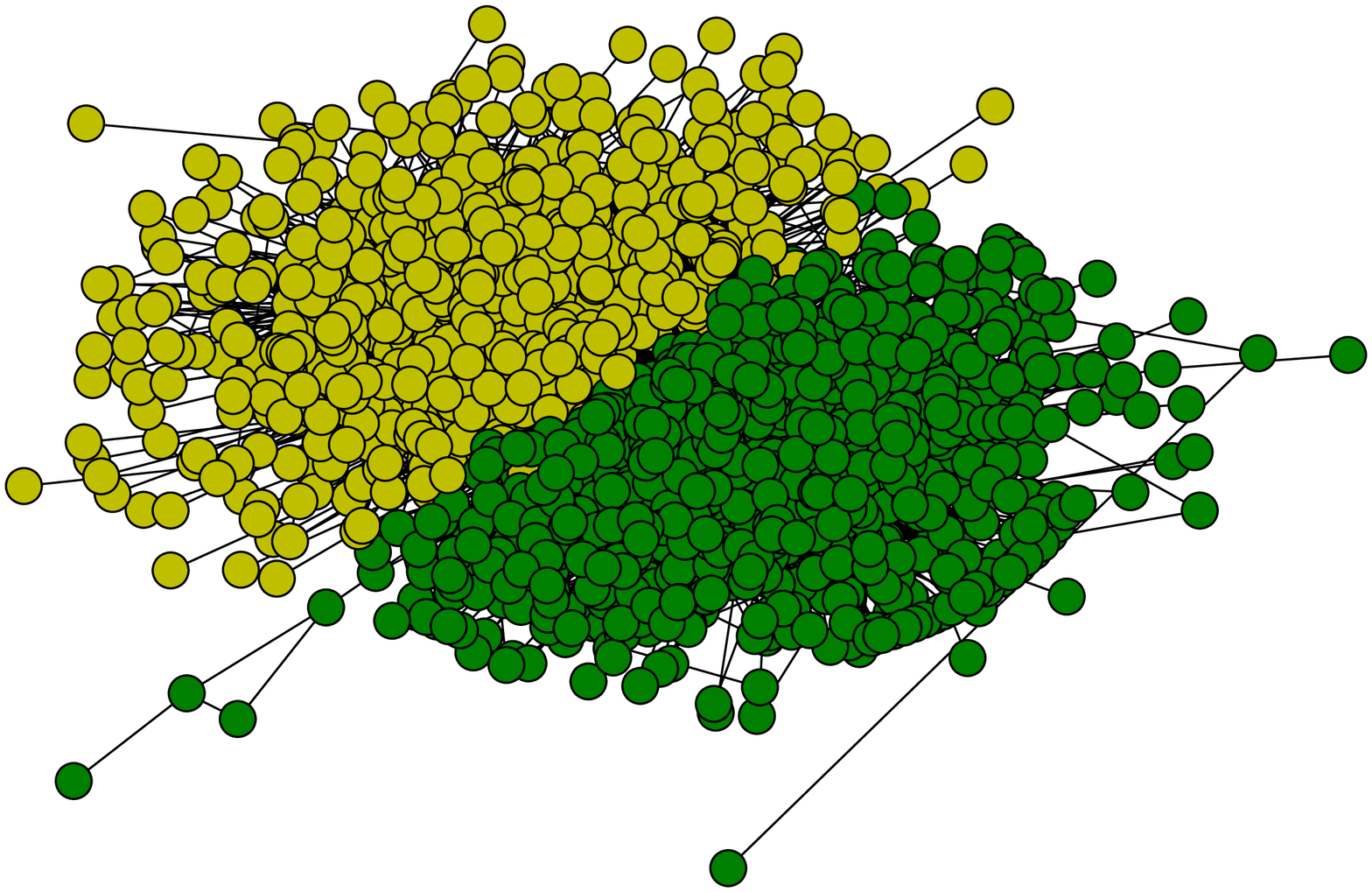}}
\caption{Political Blogs Graph}
\label{fig:political_blogs}
\end{center}
\vskip -0.2in
\end{figure} 

Figure~\ref{fig:karate} shows the classical Zachary's Karate Club graph, \cite{Zachary}. This graph has 32 nodes and a ground partition into two subsets. The partition shown in Figure \ref{fig:karate} is a partition obtained from a typical run of CLAG with $k=2$. It coincides with the ground partition except for one node, node 8, which is often miss-classified by community detection algorithms (see \cite{Fortunato201075}). We note that because the graph is small, CLAG is somewhat sensitive to the random initialization. Some invocations of the algorithm would produce the wrong partition. 
A simple common way to obtain consistent results is to restart the algorithm 3 times and to choose the partition for which some cost, such as for instance modularity,\cite{GNFoot}, is maximal.  

Figure~\ref{fig:political_blogs} depicts the political blogs graph, \cite{PolBlogs}.  The nodes are political 
blogs, and the graph has an (undirected) edge if one of the blogs had a link to the other. There are 1222 
nodes in the graph. The ground truth partition of this graph has two components - the right wing and left wing 
blogs. The labelling of the ground truth was partially automatic and partially manual, and both processes could introduce some errors. CLAG consistently reconstructs the ground truth partition with only 57 to 60 nodes misclassified. These results are similar to results obtained by other methods for this graph,~\cite{NewmanSBM}.

\subsection{LFR Benchmarks}
The LFR benchmark, \cite{LFRBench}, \cite{LFBench} is a model of a random graph with communities, such that the node degrees and community 
sizes have power law distributions, as often observed in real graphs. An important parameter of this model is the mixing 
parameter $\mu \in [0,1]$, which controls the fraction of the edges of a node that go outside the node's community (or 
outside all of node's communities, in the overlapping case). For small $\mu$, there will be a small number of edges going 
outside the communities, leading to disjoint, easily separable graphs, and the boundaries between communities will become 
less pronounced as $\mu$ grows.  The model is generated roughly as follows: First one samples the degrees from a specified 
power law and assigns them to nodes. Then one samples community sizes, from another power law, until the sizes sum up to the 
total size of the graph (or more accordingly, in the overlapping case). One can also clamp the power law, so that community 
sizes and degrees will fall in predefined boundaries. Then one connects the nodes at random, in a way that preserves degrees 
and the mixing coefficient. The paper  \cite{LFRBench} introduces the benchmarks for non-overlapping communities and in \cite{LFBench} the overlapping communities case is treated. 

The quality of communities found by an algorithm will be measured by a version of the normalized mutual information with respect to the ground truth communities. 
Given two partitions, $P,Q$ on a set $V$, the normalized mutual information between the partitions is 
\begin{equation}
\label{eq:nmi_def}
	NMI(P,Q) = 2\frac{I(P,Q)}{H(P) + H(Q)},
\end{equation}
where $H$ is the Shannon entropy of the partition and $I$ is the mutual information between the partitions (see \cite{Cover2006}, \cite{Fortunato201075}). An important property of the NMI is that it is equal to $1$ if and only if the partitions $P$ and $Q$ coincide, and it takes values between $0$ and $1$ otherwise. 

In NMI, the sets inside $P,Q$ can not overlap. An extension of NMI to the overlapping case was proposed in \cite{LFK_nmi}, and also has the property of being equal to $1$ if and only if the communities coincide. 
This extension is used to evaluate the results in \cite{GopBlei} and is also used in \cite{LFComp}, where a number of non-overlapping community detection algorithms are compared. Note that in both these papers the extended NMI is denoted by NMI. Here we will refer to the extended NMI by ENMI. The values of ENMI are usually lower then the values of the NMI. 

\subsubsection{Non-overlapping case}
\label{sec:lfr}

Figure \ref{fig:lfr_p0_enmi} shows the results of running CLAG on graphs generated from the non-overlapping LFR model, with $N=1000$ and $N=5000$ nodes, where the community sizes where allowed to range between $10$ to $50$ (denoted by $S$ in the graph), and between $20$ to $100$ (denoted by $B$), and where $\mu$ ranges between $0$ and $0.8$ in steps of $.1$. For all the graphs, the average degree is 20, the maximum degree 50, the exponent of the degree 
distribution is $-2$ and that of the community size distribution is $-1$. These parameters correspond to experiments in 
\cite{LFComp}.

The $x$ axis of Figure \ref{fig:lfr_p0_enmi} is the mixing parameter $\mu$ and the $y$ axis is the ENMI. Each point on the graph is an average of the ENMI on 20 instances of the random graphs with a given parameter set. We have not used restarts for these experiments, and we have set the number of communities $k$ to be the true number of communities for each instance. 

The results in figure \ref{fig:lfr_p0_enmi} indicate that CLAG performs better than most of the algorithms tested in \cite{LFComp}. These results are also similar to those of the Poisson model algorithm on this benchmark in \cite{NewmanPoi}. 

\begin{figure}[ht]
\vskip 0.2in
\begin{center}
\centerline{\includegraphics[width=\columnwidth]{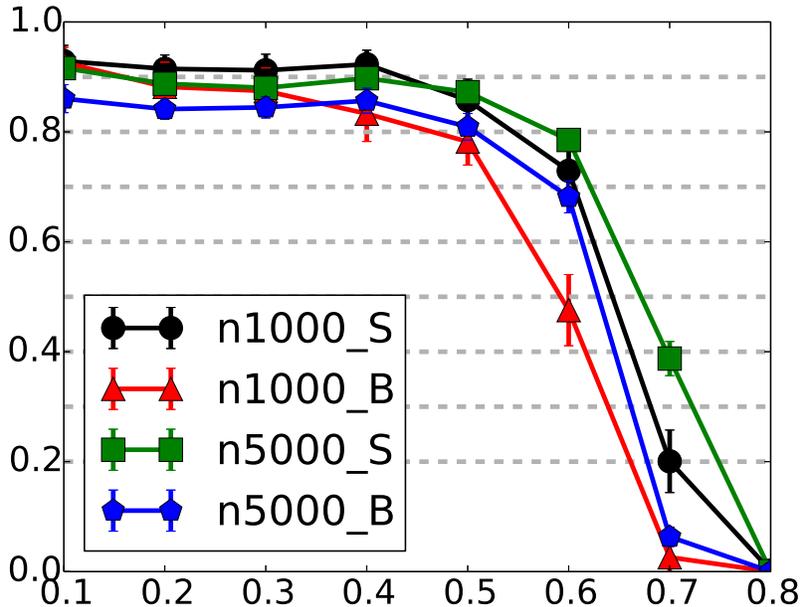}}
\caption{LFR benchmarks,ENMI }
\label{fig:lfr_p0_enmi}
\end{center}
\vskip -0.2in
\end{figure}

\subsubsection{Overlapping case}     
\label{sec:olfr}
For this section we use the overlapping LFR model, as defined in \cite{LFRBench}, with the same parameters as 
were used for evaluation in ~\cite{GopBlei}. In addition to the parameters that are present on the non-overlapping model, in the overlapping 
LFR model one specifies the parameters $\tilde{n}$ and $\tilde{m}$. $\tilde{n}$ is the number of nodes
that participate in multiple communities, and each such node will be a member of $\tilde{m}$ communities. 
The rest of the nodes will participate in exactly one community each. For all the experiments in this section, 
$\tilde{n} = N \big/ 2$, where $N$ is the total number of nodes, and $\tilde{m} = 4$. The average degree 
is $60$ for all the graphs. The maximum degree, $maxk$ , minimal and maximal community sizes, $minc$ and 
$maxc$ are functions of $N$. Their values are specified in Table \ref{tab:eval_settings}. The exponents 
of the degree and the community size distributions are the defaults, $-2$ and $-1$. For all the experiments in 
this section we have run the CLAG algorithm with $15$ iterations of the main loop, except for the
$N=1000000$ case, where we have used 5 iterations. 

There are two experiments that are performed in this section. In the first experiment we 
create graphs of size $N = 10000$, with the parameters as described above and the mixing parameter 
$\mu$ varies between $0$ and $.7$ is steps of $.1$. For each value of the parameter $\mu$ we create 
10 instances of the model and apply CLAGO. 

Note that in a non-overlapping graphs, the case $\mu = 0$ would be trivial since the communities would 
correspond simply to the connected components of the graph. However, when communities can share nodes this is 
no longer the case.  

An interesting property of CLAG that was revealed by the experiments is 
that if one starts with a number of components significantly higher then the true number, the algorithm will 
retain only the necessary number. Namely, if starting $k$ is high, then the algorithm will return with 
many empty sets $C_j$, and the number of the non-empty ones will be close to the true $k$. Therefore, we only 
need to choose high enough $k$ to start with. The number of communities of $N\leq 100000$ graphs in this 
section is strongly concentrated around 75 (it is between 72 and 78 for all instances), and for $N=1000000$ 
this number is $750$. Consequently for all graphs with $N \leq 100000$ we set $k=150$ and for $N=1000000$ set 
$k=1500$.  We note however, that this convergence of the number of components seems to depend on the 
topological complexity of the graph. It does not happen on the benchmarks of the previous section for higher 
values of $\mu$.

Another, possibly related, feature of the algorithm that was observed on the benchmarks is that some of the 
sets $C_j$ that are returned have unreasonably small sizes. For instance, for the $N=100000$ benchmarks, where 
minimal community size is $200$, some of the retuned sets where of sizes less then 10. As a general rule, 
unless one expects to have small-size communities, one can prune these sets from the final results.  

We now return to the description of the experiments.  Figure \ref{fig:olfr_mix} contains the results
of the first experiment, as described above, and shows the value of ENMI against the mixing coefficient $\mu$. Each point is an average of the evaluations over $10$ random 
instances. The standard deviation of the results at each $\mu$ was nearly zero. We show the results with and 
without pruning. With pruning, communities of sizes less then $20$ were removed from the final results. The 
performance without pruning is close but higher then all the algorithms that were considered in 
\cite{GopBlei}, and the performance with pruning improves further. The running times will be discussed 
separately in the end of this section. 

In the second experiment in this section we evaluate the performance of CLAGO on graphs 
with sizes $N=1000$,
$10000$,$10000$ and $N = 1000000$ with mixing parameter $\mu = 0$. The results are given in Table 
\ref{tab:mu0_no_prune}. Each row represents an average of $10$ instances. First two columns are the average
ENMI and the standard deviation of that average, the third row is the average number of ground truth 
communities in the graph and the last row is the average number of communities returned by the algorithm. The 
standard deviations of these averages where less then 5, and 10 for the $N=1000000$ case. 
For $N = 100000$ and $N = 1000000$ we also perform pruning at community size of 200, and Table 
\ref{tab:mu0_prune} shows the results of the same runs, after pruning the pruning. 

As mentioned earlier, for $N = 10000$ our results are close but slightly better than all the results in 
\cite{GopBlei}. For $N = 1000000$ our results are practically the same as those of the SVI and Poisson model 
algorithms, and pruning can further improve the results. For $N=100000$ our results are worse, due to 
redundancy of many sets in the returned partition, and pruning increases the ENMI significantly. An alternative to 
pruning in this case could be using lower $k$ from the start. In real world this would mean obtaining 
 a better estimate on the number of communities before running the algorithm. One could, for instance, use the 
 same procedure that is used in \cite{GopBlei} for the SVI and Poisson algorithms.

Finally, the experiments were performed on a standard PC with $i7-4770$ CPU at $3.40$GHz under Ubuntu. 
The running time for $N=1000$ cases was less then a second. The running time on the $N=10000$ instances was $5.5$ seconds per instance, for 15 iterations of the algorithm. The running time for $N=100000$ was $3$ minutes for 15 iterations. The running time for $N=1000000$ was $155$ minutes for 5 iterations of the algorithm, about 30 minutes per iteration. The time that was allocated for the $N=1000000$ case for SVI and Poisson algorithms in \cite{GopBlei} was 24 hours. We conclude that CLAGO algorithm achieves a similar or even slightly better 
performance in a significantly shorter time. 

Note that in the $N=1000000$ case, both the size of the graph and the number of communities grew by a factor of 10 compared to $N=100000$ case. The running time of the algorithm depends on the product of these quantities, which explains the jump between these two cases.

\begin{table}[ht]
\centering
\caption{Parameter Settings For LFR Graphs}
\label{tab:eval_settings}
\begin{tabular}{|l|l|l|l|} \hline
N & maxk & minc & maxc \\ \hline
1000 & 100 & 20 & 50\\ \hline
10000 & 100 & 200 & 500\\ \hline
100000 & 316 & 2000 & 5000\\ \hline
1000000 & 1000 & 2000 & 5000\\ \hline
\end{tabular}
\end{table}

\begin{figure}[ht]
\vskip 0.2in
\begin{center}
\centerline{\includegraphics[width=\columnwidth]{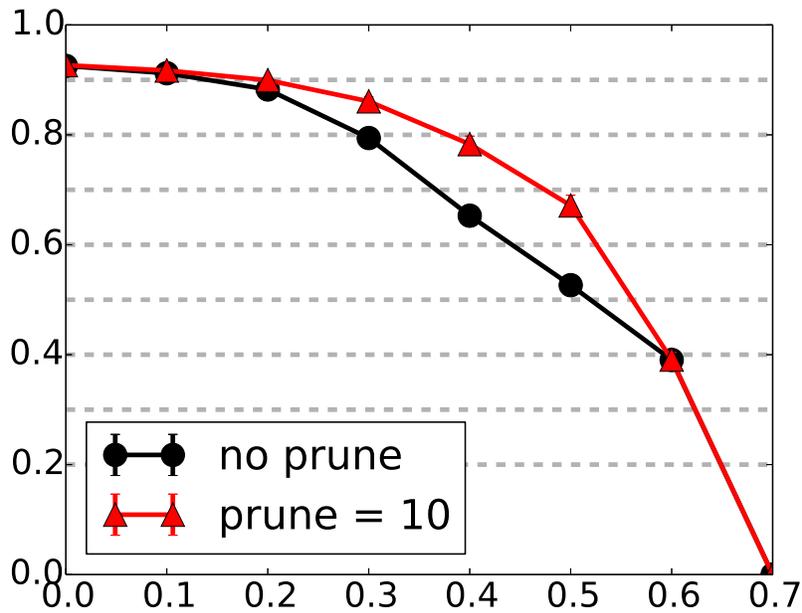}}
\caption{Overlapping LFR, N=$10,000$,ENMI}
\label{fig:olfr_mix}
\end{center}
\vskip -0.2in
\end{figure}

\begin{table}[ht]
\centering
\caption{Results for mix = 0 case}
\label{tab:mu0_no_prune}
\begin{tabular}{|l|l|l|l|l|} \hline
N & ENMI & std & orig. comm. & found comm. \\ \hline
1000 & 0.87 & 0.02 & 46.4 & 48.6\\ \hline
10000 & 0.93 & <0.01 & 75.6 & 75.2 \\ \hline
100000 & 0.63 & 0.01 & 74.3 & 149.7\\ \hline
1000000 & 0.77 & <0.01 & 759 & 811\\ \hline
\end{tabular}
\end{table}

\begin{table}[ht]
\centering
\caption{Results for mix = 0 case, with prune = 200 }
\label{tab:mu0_prune}
\begin{tabular}{|l|l|l|l|l|} \hline
N & ENMI & std & orig. comm. & found comm. \\ \hline
100000 & 0.86 & <0.01 & 74.3 & 74\\ \hline
1000000 & 0.79 & <0.01 & 759 & 773\\ \hline
\end{tabular}
\end{table}

\bibliographystyle{abbrv}
\bibliography{online_comm}  

\end{document}